\newtheorem{theorem}{Theorem}
\newtheorem{lemma}{Lemma}
\newtheorem{definition}{Definition}
\title{New Mechanisms in Flex Distribution for Bounded Suboptimal\\Multi-Agent Path Finding}
\author {
    Shao-Hung Chan\textsuperscript{\rm 1},
    Thomy Phan\textsuperscript{\rm 1},
    Jiaoyang Li\textsuperscript{\rm 2},
    Sven Koenig\textsuperscript{\rm 3}
}
\begin{document}

\maketitle

\begin{abstract}
Multi-Agent Path Finding (MAPF) is the problem of finding a set of collision-free paths, one for each agent in a shared environment. Its objective is to minimize the sum of path costs (SOC), where the path cost of each agent is defined as the travel time from its start location to its target location.
Explicit Estimation Conflict-Based Search (EECBS) is the leading algorithm for bounded-suboptimal MAPF, with the SOC of the solution being at most a user-specified factor $w$ away from optimal.
EECBS maintains sets of paths and a lower bound $LB$ on the optimal SOC. Then, it iteratively selects a set of paths whose SOC is at most $w \cdot LB$ and introduces constraints to resolve collisions.
For each path in a set, EECBS maintains a lower bound on its optimal path that satisfies constraints.
By finding an individually bounded-suboptimal path with cost at most a threshold of $w$ times its lower bound, EECBS guarantees to find a bounded-suboptimal solution.
To speed up EECBS, previous work uses \textit{flex distribution} to increase the threshold.
Though EECBS with flex distribution guarantees to find a bounded-suboptimal solution, increasing the thresholds may push the SOC beyond $w \cdot LB$, forcing EECBS to switch among different sets of paths (whose SOC are still at most $w \cdot LB$) instead of resolving collisions on a particular set of paths, and thus reducing efficiency.
To address this issue, we propose \textit{Conflict-Based Flex Distribution} that distributes flex in proportion to the number of collisions.
We also estimate the extra travel time (i.e., delays) needed to satisfy constraints and propose \textit{Delay-Based Flex Distribution}. 
On top of that, we propose \textit{Mixed-Strategy Flex Distribution}, combining both in a hierarchical framework.
We prove that EECBS with our new flex distribution mechanisms is complete and bounded-suboptimal.
Our experiments show that our approaches outperform the original (greedy) flex distribution.
Also, we redesign Focal-A* search from the previous work to improve $LB$ for a congested environment.
\end{abstract}

\begin{links}
    \link{Code}{https://github.com/shchan13/EECBS-MFD}
\end{links}

\section{Introduction}
Multi-Agent Path Finding (MAPF) is the problem of finding collision-free paths, one for each agent moving from its start location to its target location in a shared environment.
An optimal solution for a MAPF instance is a set of collision-free paths with a minimal \textit{sum of path costs} (SOC), where the path cost of an agent is its travel time for moving from its start location to its target location.
The MAPF applications include autonomous warehouses~\cite{WurmanAIM2008} and traffic management~\cite{LiAAAI23Intersection}.

Since finding optimal MAPF solutions is NP-hard~\cite{YuAAAI2013NPHard}, bounded-suboptimal MAPF algorithms have been used to speed up the search while still providing guarantees on the solution quality.
That is, the SOC is at most a user-specified suboptimality factor $w$ away from optimal.
One of the leading MAPF algorithms is Explicit Estimation Conflict-Based Search (EECBS)~\cite{LiAAAI2021EECBS}, which first finds a path for each agent individually and then resolves the collisions.
EECBS maintains several sets of paths, where each set of paths is associated with a set of constraints that have been used to resolve collisions.
For each path in a set of paths, EECBS maintains a lower bound on its optimal path that satisfies constraints.
Thus, given a set of paths, the sum of each path's lower bound (SOLB) is the lower bound on the SOC of the optimal solution that satisfies the constraints, and the minimum SOLB over all sets of paths is a lower bound $LB$ on the SOC of the optimal solution.
EECBS iteratively selects a set of paths whose SOC is at most $w \cdot LB$ and introduces new constraints to resolve collisions.
By finding an individually bounded-suboptimal path whose cost is at most a \textit{threshold} equal to $w$ times its lower bound, EECBS guarantees that each set of paths is bounded-suboptimal with respect to their constraints, i.e., the SOC is at most $w$ away from the optimal solution that satisfies the constraints.
However, the requirement that each path needs to be individually bounded-suboptimal prohibits EECBS from finding solutions where the paths are not all individually bounded-suboptimal, but their SOC is still bounded-suboptimal with respect to the constraints.
Thus, previous work has proposed using \textit{flex distribution} to relax this requirement while still guaranteeing to find a bounded-suboptimal solution~\cite{ChanAAAI2022}.
Before finding a path for an agent, the approach sums the differences between $w$ times the lower bounds and the path costs over all other agents, defined as the \textit{flex}, and then increases the threshold of the agent by this flex.
The increased threshold allows the agent to take a longer path to avoid collisions with other agents while still guaranteeing that the solution is bounded-suboptimal.

EECBS with flex distribution is able to explore part of the solution space that is not reachable when each path has to be individually bounded-suboptimal.
However, using all the flex to increase the thresholds may push the SOC beyond $w \cdot LB$, which forces EECBS to switch among different sets of paths whose SOCs are still at most $w \cdot LB$ instead of focusing on resolving collisions in a particular set of paths.
On the other hand, if the path satisfying constraints for an agent has a cost beyond $w$ times its lower bound, then by using flex, EECBS may no longer have to increase its lower bound to find the path as the flex increases its threshold.
In this case, with flex distribution, the SOLB of a set of paths may be lower than that without, which may result in an under-estimated $LB$.
That is, EECBS with flex distribution may overlook solutions that are bounded-suboptimal but whose SOCs are larger than a factor $w\cdot LB$.
Thus, in this paper, we aim to address the issue of flex distribution that the sets of paths may have SOCs larger than $w \cdot LB$.
Our contributions are as follows:
\begin{itemize}
    \item To eliminate the increment of SOC in a set of paths, we propose new mechanisms for flex distribution to avoid using all the flex.
    \item For a congested environment, we redesign \textit{Focal-A*} search from~\citet{ChanAAAI2022} for EECBS to find a path for an agent while trying to increase its lower bound in congested environments.
\end{itemize}%
Our empirical evaluation shows that EECBS with our approaches improves the success rate within a runtime limit of 120 seconds in comparison to the state-of-the-art EECBS.
We also provide a case study to demonstrate our approach in comparison to the state-of-the-art EECBS.

\section{Preliminary}

\subsection{Multi-Agent Path Finding}

We use the MAPF definition by~\citet{SternSoCS19MAPFDef}. A MAPF instance consists of an undirected graph $G=(V, E)$ and a set of $k$ agents $\bigcup_{i=1}^{k}\{a_i\}$. Each agent $a_i$ has a unique start vertex $s_i$ and a unique target vertex $l_i$.
In this paper, we use four-connected undirected grid graphs, and time is discretized into timesteps. At each timestep, an agent can either move to an adjacent vertex or wait at its current vertex.
A \textit{path} of an agent, starting at its start vertex and ending at its target vertex, is a sequence of vertices indicating where the agent is at each timestep.
Each agent permanently waits at its target vertex after it completes its path. The \textit{cost} of a path is the number of timesteps needed for the agent to move from its start to target vertices, ignoring those when the agent permanently waits at its target vertex.
When a pair of agents $a_i$ and $a_j$ respectively follow their paths $p_i$ and $p_j$, a \textit{vertex conflict}, denoted as $\langle a_i, a_j, v, t \rangle$, occurs iff both agents reach the same vertex $v$ at the same timestep $t$. On the other hand, an \textit{edge conflict}, denoted as $\langle a_i, a_j, u, v, t \rangle$, occurs iff these two agents traverse the same edge $(u,v) \in E$ in opposite directions at the same timestep $t$.
A \textit{solution} is a set of conflict-free paths, one for each agent. A solution is \textit{optimal} iff its \textit{sum of path costs} (SOC) is minimum, denoted as $C^{*}$, and \textit{bounded-suboptimal} iff its SOC is at most $w \cdot C^{*}$, where $w \geq 1$ is a user-specified\textit{ suboptimality factor}.


\subsection{Explicit Estimation Conflict-Based Search}

Explicit Estimation Conflict-Based Search (EECBS)~\citep{LiAAAI2021EECBS} is a two-level bounded-suboptimal algorithm for MAPF. Its strategy is to iteratively resolve a conflict by introducing constraints on the high level and then finding the paths on the low level to satisfy the constraints.
On the high level, EECBS constructs a \textit{Constraint Tree} (CT). A \textit{vertex constraint} indicates that an agent is not allowed to reach a vertex at a particular timestep, and an \textit{edge constraint} indicates an agent is not allowed to traverse an edge at a particular timestep.
A \textit{CT node} $N$ contains a set of constraints $\bigcup_{i=1}^{k} \Psi_i(N)$, where $\Psi_i(N)$ is the set of (vertex and edge) constraints corresponding to agent $a_i$.
A CT node also contains a set of paths, one for each agent, that satisfy its sets of constraints.
EECBS then runs a bounded-suboptimal heuristics search algorithm on the high level called Explicit Estimation Search (EES)~\citep{ThayerIJCAI2011EES}. It maintains a set of lists that contain all the generated but not yet expanded CT nodes, denoted as LISTs.

At each iteration, EECBS selects a CT node $\hat{N}$ and selects a conflict, if it exists, between a pair of paths in $\hat{N}$.
EECBS then expands CT node $\hat{N}$ and generates two child CT nodes $N$ and $N'$.
Each child CT node respectively contains not only the set of constraints from its parent CT node $\hat{N}$ but also an additional constraint $\psi_i$ (with respect to $\psi_j$) that prevents agent $a_i$ (with respect to $a_j$) from occupying vertex/edge at the conflicting timestep, i.e.,
\begin{equation}
    \Psi_i(N) = \Psi_i(\hat{N}) \cup \{\psi_i\};\:
    \Psi_j(N') = \Psi_j(\hat{N}) \cup \{\psi_j\}.
\end{equation}%
With the help of the low-level search, EECBS finds a path for agent $a_i$ (with respect to $a_j$) that satisfies constraints $\Psi_i(N)$ (with respect to $\Psi_j(N')$). EECBS pushes the child CT node to the LISTs if the path is found.

On the low level, to find a path for an agent $a_i$ in a CT node $N$, EECBS constructs a search tree with each vertex-timestep (v-t) node $n$ containing a tuple $(v,t)$ that indicates an agent staying at vertex $v$ at timestep $t$.
For a v-t node $n=(v,t)$, we define a priority function $f_i(n) = g_i(n) + h_i(v)$, where $g_i(n) = t$ is the number of timesteps for agent $a_i$ to move from its start vertex $s_i$ to vertex $v$ and $h_i(v)$ is an admissible heuristic that underestimates the number of timesteps needed to move from vertex $v$ to its target vertex $l_i$ while satisfying constraints $\Psi_i(N)$.
The number of conflicts $x_i(n)$ is computed with the paths of the other agents when agent $a_i$ moves from v-t nodes $(s_i, 0)$ to $(v,t)$.

EECBS runs \textit{focal search}~\citep{PearTPAMIl1982FocalSearch} to find a bounded-suboptimal path with its cost $c_i(N)$ and lower bound $lb_i(N)$, which is the lower bound on the minimum cost of the path satisfying constraints $\Psi_i(N)$.
Focal search maintains two lists: OPEN$_L$ and FOCAL$_L$.
OPEN$_L$ sorts all the generated but not yet expanded v-t nodes $n$ in increasing order of priority function $f_i(n)$.
FOCAL$_L$ contains those v-t nodes in OPEN$_L$ whose $f_i(n)$ are less than or equal to a \textit{threshold}, defined as
\begin{equation}
    \tau_i = w \cdot f_{\min,i}(N),
    \label{eq:ori_thre}
\end{equation}%
where $f_{\min,i}(N)$ is the minimum $f$-value among all v-t nodes in OPEN$_L$ in CT node $N$.
Focal search sorts these v-t nodes $n \in \text{FOCAL}_L$ in increasing order of their numbers of conflicts $x_i(n)$.

At each iteration, focal search includes more v-t nodes into FOCAL$_L$ if $f_{\min,i}(N)$ increases, and then expands the top v-t node in FOCAL$_L$ that has the minimum $x_i$ value.
When focal search terminates with the vertex of the top v-t node being the target vertex $l_i$, we set $lb_i(N)$ to $f_{\min,i}(N)$, which is defined as the (best-known) lower bound on the minimum cost of the path that satisfies constraints $\Psi_i(N)$.
Since $f_i(n)$ of any v-t node $n$ in FOCAL$_L$ is at most threshold $\tau_i$, focal search always finds a bounded-suboptimal path with cost $c_i(N)$ satisfying $lb_i(N) \leq c_i(N) \leq \tau_i$.
The SOC of CT node $N$ thus satisfies
\begin{equation}
    C(N) = \sum_{j \in [k]} c_i(N) \leq w \cdot \sum_{j \in [k]} lb_i(N) = w \cdot LB(N).
    \label{eq:bounded_CT}
\end{equation}%
EES keeps track of the minimum SOLB $LB(N)$ among all the CT nodes $N$ in LISTs, which indicates the lower bound $LB$ on the SOC of the optimal solution.
Accordingly, we provide the following definitions and a lemma:
\begin{definition}
    The \textbf{local suboptimality} of a CT node $N$ is defined as $C(N) / LB(N)$.
\end{definition}%
\begin{definition}
    A CT node $N$ is \textbf{locally bounded-suboptimal} iff its local suboptimality is bounded by the suboptimality factor $w$, i.e., Equation (\ref{eq:bounded_CT}) holds.
\end{definition}%
\begin{lemma}[\citet{LiAAAI2021EECBS}]
EECBS is complete and bounded-suboptimal if each generated CT node is locally bounded-suboptimal.
\label{lem:complete}
\end{lemma}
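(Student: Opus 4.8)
The plan is to derive both claims — bounded-suboptimality and completeness — from one invariant: the value $LB$ that EES tracks always satisfies $LB \le C^*$. To establish it I would induct on the number of high-level expansions and show that the CT nodes currently in LISTs \emph{cover} every solution, i.e., for every conflict-free path set $p$ there is a node $N$ in LISTs all of whose constraints $\bigcup_{i\in[k]}\Psi_i(N)$ are satisfied by $p$. This holds trivially at the root, and is preserved by an expansion: splitting a node $N$ on a conflict $\langle a_i,a_j,v,t\rangle$ yields two children, and any $p$ satisfying $\Psi(N)$ cannot place both $a_i$ and $a_j$ at $v$ at timestep $t$, so it satisfies the extra constraint of at least one child. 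Applying the invariant to an optimal solution $p^*$ gives a node $N^*$ in LISTs whose constraints $p^*$ satisfies; since each $lb_i(N^*)$ underestimates the cost of a constraint-satisfying optimal path for $a_i$, which is at most the cost of $a_i$ in $p^*$, we get $LB(N^*) = \sum_{i\in[k]} lb_i(N^*) \le C^*$, hence $LB \le C^*$ at all times.

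For bounded-suboptimality, recall EECBS returns a goal CT node $N_g$ (its paths are conflict-free). EES is designed to return $N_g$ only when either $C(N_g) \le w\cdot LB$, or $N_g$ attains the minimum $LB$ over LISTs; in the latter case the hypothesis — $N_g$ is locally bounded-suboptimal — gives $C(N_g) \le w\cdot LB(N_g) = w\cdot LB$. In either case, combined with the invariant, $C(N_g) \le w\cdot LB \le w\cdot C^*$, so the returned solution is bounded-suboptimal.

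For completeness I would first note that every \emph{expanded} CT node $N$ satisfies $C(N) \le w\cdot C^*$: if $N$ is expanded because its SOC is within a factor $w$ of $LB$ this follows from the invariant, and if it is expanded as a minimum-$LB$ node then $C(N) \le w\cdot LB(N) \le w\cdot C^*$ by the hypothesis and the invariant. Hence every path in any expanded node has cost at most $w\cdot C^*$, so all conflicts EECBS ever resolves occur at bounded timesteps, and all constraints it ever adds come from a finite set; therefore only finitely many distinct CT nodes are generated and the high-level search halts (the no-solution case being handled by the standard polynomial makespan bound for MAPF). It halts at a goal whenever a solution exists: descending from the root into the child that keeps an optimal solution covered adds, at each step, a fresh constraint on a vertex/edge-timestep actually used by the node's paths, which cannot recur, so that branch is finite and its leaf is conflict-free — hence a goal node is eventually generated and returned.

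The delicate part is the termination argument: turning ``every expanded node has SOC at most $w\cdot C^*$'' into genuine finiteness of the explored CT, and confirming that the branch tracking an optimal solution bottoms out at a goal rather than descending forever. The bounded-suboptimality half is essentially bookkeeping once the cover invariant is in place.
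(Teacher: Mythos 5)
Your proof is correct and takes essentially the same route as the source this lemma is imported from (\citet{LiAAAI2021EECBS}; the present paper does not reprove it): the covering invariant over LISTs yields $LB \le C^{*}$, local bounded-suboptimality of the minimum-$LB$ node makes every node EES is permitted to expand (and hence the returned goal) satisfy $C(N) \le w \cdot LB \le w \cdot C^{*}$, and the bound $C(N) \le w \cdot C^{*}$ on expanded nodes confines all constraints to timesteps at most $w \cdot C^{*}$, giving a finite explored tree whose LISTs can never empty while a solution exists. The only soft spots are minor: the unsolvable-instance case (which you rightly flag as needing a separate argument) and the final termination step, which is cleaner if phrased as ``the explored tree is finite and LISTs is never empty, so a goal must be returned'' rather than tracking the optimal branch explicitly.
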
%
\begin{definition}
    The \textbf{global suboptimality} of a CT node $N$ is defined as $C(N) / LB$, where $LB := \min_{\tilde{N} \in \text{LISTs}} LB(\tilde{N})$.
\end{definition}%
\begin{definition}
    A CT node $N$ is \textbf{globally bounded-suboptimal} iff its global suboptimality is bounded by the suboptimality factor $w$, i.e., $C(N) / LB \leq w$ holds.
\end{definition}%

At each iteration, EES only expands a CT node $N$ from LISTs that are globally bounded-suboptimal in order to guarantee finding a bounded-suboptimal solution.

\subsection{Greedy-Based Flex Distribution (GFD)}
EECBS finds a bounded-suboptimal solution as long as each generated CT node $N$ is locally bounded-suboptimal.
Such a guarantee is maintained by running the focal search that finds an individually bounded-suboptimal path for each agent, i.e., $c_i(N) \leq w \cdot lb_i(N), \forall i \in [k]$.
However, one can observe that it is unnecessary to insist that each path be individually bounded-suboptimal in order to make a CT node locally bounded-suboptimal.
Thus, previous work had proposed \textit{flex distribution} that increases thresholds when finding paths while still guaranteeing finding a bounded-suboptimal solution~\cite{ChanAAAI2022}.
At each iteration, suppose EECBS expands CT node $\hat{N}$ and generates one of its child CT nodes $N$ with an additional constraint on agent $a_i$.
While the paths of other agents remain fixed (i.e., $lb_j(\hat{N}) = lb_j(N)$ and $c_j(\hat{N}) = c_j(N)$ hold for all $j \in [k] \setminus \{i\}$), to find the path for agent $a_i$ in a CT node $N$ via focal search, \citet{ChanAAAI2022} increases its threshold by adding the \textit{flex} over the other $k-1$ agents, denoted as $\Delta_i$.
Modified from Equation (\ref{eq:ori_thre}), the threshold thus becomes
\begin{equation}
    \tau_i = w \cdot \max\{f_{\min,i}(N), lb_i(\hat{N})\} + \Delta_i.
    \label{eq:flex_threshold}
\end{equation}%
\citet{ChanAAAI2022} sets $\Delta_i$ to the \textit{maximum allowed flex} $\Delta_{\max,i}$, defined as
\begin{equation}
    \Delta_i = \Delta_{\max,i} = \sum_{j \in [k] \setminus \{i\}} (w \cdot lb_j(\hat{N}) -  c_j(\hat{N})).
    \label{eq:flex_ub}
\end{equation}%
That is, \citet{ChanAAAI2022} distributes the maximum allowed flex, which increases the threshold if $\Delta_{\max,i} \geq 0$ holds. The work also proves that the threshold $\tau_i \geq lb_i(N)$ always holds even when the maximum allowed flex is negative. In this paper, we term such an approach \textit{Greedy-Based Flex Distribution} (GFD).
If $\Delta_{\max,i} < 0$ occurs, then at least one of the other agents $a_{j \in [k] \setminus \{i\}}$ has the path with cost $c_j > w \cdot lb_j(\hat{N})$, indicating that the flex from agent $a_i$ was used along the branch of CT node $N$.
Also, when focal search terminates, lower bound $lb_i(N)$ in CT node $N$ is set to
\begin{equation}
    lb_i(N) = \max\{f_{\min,i}(N), lb_i(\hat{N})\},
    \label{eq:lb}
\end{equation}%
which is the \textit{best-known lower bound} on the cost of the optimal path for agent $a_i$ that satisfies the set of constraints $\Psi_i(N)$ in CT node $N$. This is to ensure the flex provided by the agent does not decrease.
%
%
\begin{theorem}
    EECBS with GFD is complete and bounded-suboptimal.
    \label{thm:GFD}
\end{theorem}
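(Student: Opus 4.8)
The plan is to reduce everything to Lemma~\ref{lem:complete}: since that lemma already delivers completeness and bounded-suboptimality once every generated CT node is locally bounded-suboptimal, it suffices to prove that EECBS with GFD only ever generates CT nodes $N$ with $C(N)\le w\cdot LB(N)$. I would prove this by induction on the order in which CT nodes are generated (equivalently, structural induction on the CT), and the heart of the argument will be a short telescoping computation in the inductive step that shows the flex borrowed by the constrained agent is exactly paid back by the fixed costs of the other agents.

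For the base case, the root CT node has no parent and hence no flex to borrow, so every agent's path comes from ordinary focal search with $\tau_i = w\cdot f_{\min,i}(N_{\mathrm{root}})$ and $lb_i(N_{\mathrm{root}}) = f_{\min,i}(N_{\mathrm{root}})$; then $c_i(N_{\mathrm{root}}) \le w\cdot lb_i(N_{\mathrm{root}})$ for each $i$, and summing over $i\in[k]$ yields $C(N_{\mathrm{root}}) \le w\cdot LB(N_{\mathrm{root}})$, exactly as for vanilla EECBS (Equation~\ref{eq:bounded_CT}).

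For the inductive step, assume $\hat N$ is locally bounded-suboptimal and let $N$ be the child generated by adding a constraint $\psi_i$ on agent $a_i$; all other agents are untouched, so $c_j(N)=c_j(\hat N)$ and $lb_j(N)=lb_j(\hat N)$ for $j\ne i$. Focal search returns a path for $a_i$ with $c_i(N)\le\tau_i$, and combining Equations~\ref{eq:flex_threshold}, \ref{eq:flex_ub}, and~\ref{eq:lb} gives
\[
  \tau_i = w\cdot lb_i(N) + \sum_{j\in[k]\setminus\{i\}}\bigl(w\cdot lb_j(\hat N)-c_j(\hat N)\bigr).
\]
Hence
\[
  C(N)=c_i(N)+\sum_{j\ne i}c_j(N)\le w\cdot lb_i(N)+\sum_{j\ne i}\bigl(w\cdot lb_j(\hat N)-c_j(\hat N)\bigr)+\sum_{j\ne i}c_j(\hat N)=w\sum_{j\in[k]}lb_j(N)=w\cdot LB(N),
\]
so $N$ is locally bounded-suboptimal. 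The key point is that the $c_j(\hat N)$ terms injected by the flex exactly cancel the fixed costs of the untouched agents, and this cancellation does not care about the sign of $\Delta_{\max,i}$ because Equation~\ref{eq:flex_ub} sets the distributed flex equal to that signed quantity.

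Two side conditions still need checking, and I expect one of them to be the only delicate point. First, the low-level focal search must still return \emph{some} path for $a_i$ whenever a path consistent with $\Psi_i(N)$ exists, so that completeness is preserved; this requires $\tau_i\ge lb_i(N)$, which holds even when $\Delta_{\max,i}<0$ (the fact imported from prior work), after which the standard focal-search termination argument applies — this is the step I would flag as the main obstacle, since the negative-flex case is where the threshold is smallest. Second, $lb_i(N)=\max\{f_{\min,i}(N),lb_i(\hat N)\}$ must remain a valid lower bound on the cheapest $\Psi_i(N)$-consistent path: $f_{\min,i}(N)$ is one by admissibility of $h_i$, and $lb_i(\hat N)$ is one because $\Psi_i(\hat N)\subseteq\Psi_i(N)$, so their maximum is too; consequently $LB(N)$ stays a valid lower bound and $LB=\min_{\tilde N\in\text{LISTs}}LB(\tilde N)\le C^{*}$. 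With both conditions in hand, every generated CT node is locally bounded-suboptimal, Lemma~\ref{lem:complete} applies, and EECBS with GFD is complete and bounded-suboptimal.
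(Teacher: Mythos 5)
Your proof is correct and follows essentially the same route as the paper's: the same telescoping computation showing $C(N) \le \tau_i + \sum_{j\ne i} c_j(\hat{N}) = w \cdot lb_i(N) + w\sum_{j \ne i} lb_j(\hat{N}) = w\cdot LB(N)$, followed by an appeal to Lemma~\ref{lem:complete}. The additional details you supply (the root-node base case, the check that $\tau_i \ge lb_i(N)$ even when $\Delta_{\max,i}<0$, and the validity of $lb_i(N)$ as a lower bound) are sound refinements of points the paper treats as imported facts in its more condensed argument.
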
%

\begin{proof}
After focal search finds a path for agent $a_i$ via GFD, the SOC of the generated CT node $N$ satisfies
\begin{equation}
\begin{split}
    &C(N) = c_i(N) + \sum_{j \in [k] \setminus \{i\}} c_j(\hat{N})\leq \tau_i + \sum_{j \in [k] \setminus \{i\}} c_j(\hat{N})\\
    &\overset{(\ref{eq:flex_threshold})}= w \cdot \max\{f_{\min,i}(N),lb_i(\hat{N})\} + \Delta_i + \sum_{j \in [k] \setminus \{i\}} c_j(\hat{N})\\
    &\overset{(\ref{eq:flex_ub}),(\ref{eq:lb})}= w \cdot lb_i(N) + \Delta_{\max,i} + \sum_{j \in [k] \setminus \{i\}} c_j(\hat{N})\\
    &= w \cdot \sum_{j \in [k]} lb_j(N) = w \cdot LB(N).
    \label{eq:proof_CT}
\end{split}
\end{equation}%
Thus, each CT node is locally bounded-suboptimal. According to Lemma \ref{lem:complete}, EECBS with GFD is complete and bounded-suboptimal.
\end{proof}%


\section{Conflict-Based Flex Distribution (CFD)}

\subsubsection{GFD Limitations}

\begin{figure}[t]
    \centering
    \includegraphics[width=0.81\linewidth]{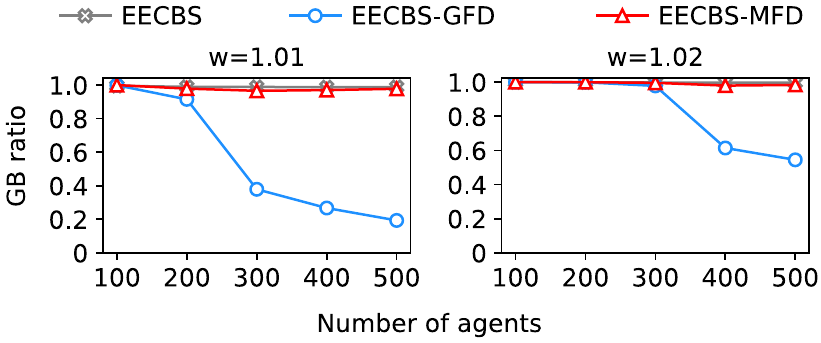}
    \caption{The GB ratio of EECBS, EECBS-GFD, and EECBS-MFD when solving MAPF instances on \texttt{ost003d} graph with $w=\{1.01, 1.02\}$.}
    \label{fig:GFD_limitaion_globRatio}
\end{figure}%

With the relaxation of the threshold on the path cost provided by GFD, EECBS can further reduce the number of conflicts when finding a path for an agent.
However, one limitation of GFD is that agents may use all the flex when finding their paths in a CT node $N$, resulting in an increment in its SOC $C(N)$ close to $w \cdot LB(N)$.
In this case, though each generated CT node is still locally bounded-suboptimal, it may not be globally suboptimal and thus may not be expanded.
Thus, EECBS may switch among CT nodes on different branches whose SOCs are at most $w \cdot LB$ instead of resolving collisions along the branch of CT node $N$.
Figure~\ref{fig:GFD_limitaion_globRatio} shows the \textit{GB ratio}, defined as the ratio of CT nodes that are globally bounded-suboptimal with respect to the lower bound $LB$ when they are generated, indicating EECBS with GFD tends to generate CT nodes that are not globally bounded-suboptimal and thus lowers the efficiency.
On the other hand, if the path satisfying constraints $\Psi_i(N)$ for agent $a_i$ has a cost larger than $w \cdot lb_i(N)$, then with the increasing threshold, focal search may no longer have to expand v-t nodes whose $f$-values are close to optimal to increase its lower bound in finding such a path.
Thus, the lower bound $LB(N)$ of CT node $N$ with flex distribution may be lower than that without, resulting in an under-estimated $LB$.
That is, EECBS with flex distribution may overlook bounded-suboptimal solutions whose SOCs are larger than $ w\cdot LB$, and thus affect the performance in congested graphs~\cite{ChanAAAI2022}.

\subsubsection{Key Observation} When finding a path for an agent, the objective (or the ``need'') for the focal search is to quickly minimize the number of conflicts while satisfying constraints. This observation provides a clue for us in designing new mechanisms that target the needs of the focal search.

To avoid using all the flex, we propose \textit{Conflict-Based Flex Distribution} (CFD) that distributes flex fractionally according to the number of conflicts involved with the agent that needs to find its path.
Suppose EECBS expands CT node $\hat{N}$ and generates one of its child CT nodes $N$ that has to find a path for agent $a_i$ satisfying constraints $\Psi_i(N)$.
If the maximum allowed flex is non-negative, then we distribute flex in proportion to the number of conflicts that it encounters with other agents. Otherwise, we switch to GFD. Thus, the distributed flex via CFD becomes
\begin{equation}
    \Delta_i = 
    \begin{cases}
    \rho_i \cdot \Delta_{\max,i}, \text{where } \rho_i = \frac{\mathcal{X}_i(\hat{N})}{\mathcal{X}(\hat{N})},& \text{if } \Delta_{\max,i} \geq 0\\
    \Delta_{\max.i},              & \text{otherwise},
    \end{cases}
    \label{eq:CFD}
\end{equation}%
where $\mathcal{X}_i(\hat{N})$ is the number of conflicts that the path of agent $a_i$ encounters with the paths of other agents $a_{j \in [k] \setminus \{i\}}$ in CT node $\hat{N}$, and $\mathcal{X}(\hat{N})$ is the number of conflicts among all pairs of paths in CT node $\hat{N}$.
%
%

The rationale of CFD is to (implicitly) resolve the conflicts during focal search by using flex instead of (explicitly) introducing constraints when finding a path for agent $a_i$ in CT node $N$. Also, compared to GFD, which uses as much flex as possible, CFD saves some flex in the future for other paths to resolve conflicts.
On the other hand, if the maximum flex is negative, then we return to GFD since this is the case where, along the branch of CT node $N$, some other agents $a_{j \in [k] \setminus \{i\}}$ have already used flex from agent $a_i$. Thus, we should reduce the threshold $\tau_i$ to ensure CT node $N$ is still bounded-suboptimal (i.e., Equation (\ref{eq:bounded_CT}) holds) after finding a path $p_i(N)$ for agent $a_i$.
Although this distribution method is trivial, we found out that it efficiently improves the efficiency of EECBS, as shown in Figure~\ref{fig:succ_all}.


\section{Delay-Based Flex Distribution (DFD)}

Although the original motivation of flex distribution is to reduce the number of conflicts, we can further utilize flex to speed up the search while satisfying constraints.
Suppose EECBS expands CT node $\hat{N}$ and generates one of its child CT nodes $N$ that has to find a path for agent $a_i$ to satisfy constraints $\Psi_i(N)$.
In this case, constraints typically cause delay as focal search may have to find a longer path.
We define the \textit{delay} as the difference in the path costs for agent $a_i$ between CT nodes $N$ and $\hat{N}$.
Thus, on top of CFD, we propose \textit{Delay-Based Flex Distribution} (DFD) that arranges some flex $\Delta_{d,i}$ by estimating the delay in finding a path for agent $a_i$ that satisfies constraints $\Psi_i(N)$. That is,
\begin{equation}
    \Delta_{d,i} = \min\{\Delta_{\max,i} \:, \sum_{\psi \in \Psi_i(N)} d_{\psi}\},
\end{equation}%
where $d_{\psi}$ is the estimated delay needed for a path to satisfy a constraint $\psi \in \Psi_i(N)$, and the value of $\Delta_{d,i}$ is upper-bounded by the maximum allowed flex in CT node $\hat{N}$.
Also, we require the sum of delays to be non-negative, i.e., $\sum_{\psi \in \Psi_i(N)} d_{\psi} \geq 0$ holds, resulting in $\Delta_{d,i} \geq 0$ if $\Delta_{\max,i} \geq 0$ holds.
Combined with CFD, the distributed flex is thus
\begin{equation}
    \Delta_i = 
    \begin{cases}
    \Delta_{d,i} + \rho_i \cdot (\Delta_{\max,i} - \Delta_{d,i}),& \text{if } \Delta_{\max,i} \geq 0\\
    \Delta_{\max.i},              & \text{otherwise.}
    \end{cases}
    \label{eq:DFD}
\end{equation}%
The rationale is to distribute some flex $\Delta_{d,i}$ for the delays from the constraints and then distribute the remaining flex $\Delta_{\max,i} - \Delta_{d,i}$ with a ratio of $\rho_i$ from CFD.

However, how to estimate the delay precisely without introducing computational overhead remains an open question, and in this paper, we propose a rule-based strategy.
For each vertex or edge constraint, we assume the replanned path can satisfy such a constraint by introducing a wait action, resulting in a delay of 1.

\subsubsection{Delays from Symmetric Reasoning} Besides the vertex and edge constraints, previous works have proposed reasoning for conflicts based on the graph's geometry, known as symmetric breaking for EECBS~\cite{LiICAPS2020TargetCorridorReasoning, LiAAAI2021EECBS}.
In particular, the works identified the \textit{corridor conflict} and the \textit{target conflict}.
Given a graph $G=(V,E)$, a corridor $Cr = Cr_0 \cup \{v_b, v_e\}$ is defined as a chain of connected vertices $Cr_0 \subseteq V$, each with a degree of 2, together with two endpoints $\{v_b, v_e\} \subseteq V$ connected to $Cr_0$. A corridor conflict occurs if two agents traverse the same corridor in opposite directions and reach the same vertex/edge at the same timestep.
Suppose agent $a_i$ traverses from vertices $v_b$ to $v_e$ and agent $a_j$ in the opposite direction. To resolve the corridor conflict from a CT node $\hat{N}$, when generating one of its child CT nodes $N$, \citet{LiICAPS2020TargetCorridorReasoning} introduced a \textit{range constraint} that forbids agent $a_i$ from reaching its exit vertex $v_e$ in the range of timestep $[0, t_{\min,j}]$, where $t_{\min,j}$ is the minimum timestep needed for agent $a_j$ to traverse to its exit vertex $v_b$. Similarly, a range constraint is introduced to agent $a_j$ when generating the other child CT node.
Thus, given a range constraint $\psi$, we intuitively estimate its delays as $d_{\psi} = t_{\min,j} + 1 - t_{e,i}$, where $t_{e,i}$ is the timestep agent $a_i$ reaches its exit vertex $v_e$ when following its conflicting path in CT node $\hat{N}$.

On the other hand, a target conflict occurs if one agent $a_i$ traverses the target vertex $l_j$ of another agent $a_j$ at the timestep $t$ when agent $a_j$ has already arrived at its target vertex $l_j$ at timestep $t_j$ and stayed there permanently, i.e., $t_j = c_j(\hat{N}) \leq t$. 
To resolve the target conflict from a CT node $\hat{N}$, \citet{LiICAPS2020TargetCorridorReasoning} introduced a \textit{length constraint}. That is, one of its child CT nodes $N$ has the constraint where agent $a_j$ should reach its target vertex with the path cost $c_j(N) \leq t$, indicating that other agents $a_{m \in [k] \setminus \{j\}}$ should not traverse vertex $l_j$ at or after timestep $t$.
When finding the path for any agent violating the constraint, since introducing delay may cause it to reach vertex $l_j$ even later, we assume that 0 delay is needed.
On the other hand, the other child CT node $N'$ has the constraint where the agent $a_j$ should find a path with cost $c_j(N') > t$ (i.e., reaching its target vertex later than $t$).
When finding the path for agent $a_j$, we set the delay to $d_{\psi} = t - c_j(\hat{N})$.
Note that since EECBS utilizes symmetric reasoning during the search for better performance, we can obtain the estimated delay without introducing a huge computational overhead.
Figure~\ref{fig:DFD_example} shows an illustrative example of the variable relations when using DFD with the positive maximum allowed flex.

\begin{theorem}
    EECBS with CFD and/or DFD is complete and bounded-suboptimal.
\end{theorem}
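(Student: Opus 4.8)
The plan is to reduce the statement to Lemma~\ref{lem:complete} by showing that every CT node generated under CFD or DFD is locally bounded-suboptimal, mirroring the proof of Theorem~\ref{thm:GFD}. Since the paths of the other agents stay fixed when generating the child $N$ from $\hat N$ (so $lb_j(\hat N)=lb_j(N)$ and $c_j(\hat N)=c_j(N)$ for $j\neq i$), and since the threshold formula~(\ref{eq:flex_threshold}) and the lower-bound update~(\ref{eq:lb}) are unchanged, the only thing that differs from GFD is the value of the distributed flex $\Delta_i$. Hence the whole argument hinges on two elementary facts: (i) $\Delta_i \le \Delta_{\max,i}$ always holds, and (ii) the low-level focal search remains well-defined, i.e.\ the threshold satisfies $\tau_i \ge lb_i(N)$, so a path with $lb_i(N) \le c_i(N) \le \tau_i$ is returned.

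First I would dispose of the case $\Delta_{\max,i} < 0$. Here Equations~(\ref{eq:CFD}) and~(\ref{eq:DFD}) both set $\Delta_i = \Delta_{\max,i}$, so CFD and DFD coincide with GFD on this branch; the conclusion, including $\tau_i \ge lb_i(N)$, then follows verbatim from Theorem~\ref{thm:GFD} and the negative-flex property of GFD established by~\citet{ChanAAAI2022}. Next, for $\Delta_{\max,i} \ge 0$, I would observe that $0 \le \rho_i \le 1$: since $\hat N$ is being expanded it contains at least one conflict, so $\mathcal{X}(\hat N)\ge 1$ and $\rho_i$ is well-defined, and $0 \le \mathcal{X}_i(\hat N) \le \mathcal{X}(\hat N)$ because the conflicts of a single path with the others form a subset of all conflicting pairs. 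For CFD this gives $0 \le \Delta_i = \rho_i \Delta_{\max,i} \le \Delta_{\max,i}$. For DFD, the $\min$ in the definition of $\Delta_{d,i}$ together with the stated requirement $\sum_{\psi} d_\psi \ge 0$ yields $0 \le \Delta_{d,i} \le \Delta_{\max,i}$, and since $\Delta_i = \Delta_{d,i} + \rho_i(\Delta_{\max,i} - \Delta_{d,i})$ is a convex combination of $\Delta_{d,i}$ and $\Delta_{\max,i}$, again $0 \le \Delta_i \le \Delta_{\max,i}$. In particular $\Delta_i \ge 0$, hence $\tau_i = w \cdot lb_i(N) + \Delta_i \ge lb_i(N)$ and focal search returns a path with $c_i(N)\le\tau_i$. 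Substituting $\Delta_i \le \Delta_{\max,i}$ into the chain of (in)equalities of Equation~(\ref{eq:proof_CT})---with the single equality $\Delta_i=\Delta_{\max,i}$ replaced by $\Delta_i \le \Delta_{\max,i}$---gives $C(N) \le w\cdot lb_i(N) + \Delta_{\max,i} + \sum_{j\neq i} c_j(\hat N) = w\cdot\sum_{j\in[k]} lb_j(N) = w\cdot LB(N)$, i.e.\ Equation~(\ref{eq:bounded_CT}). Thus every generated CT node is locally bounded-suboptimal, and Lemma~\ref{lem:complete} delivers completeness and bounded-suboptimality.

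The step requiring the most care is bounding $\Delta_i$ above by $\Delta_{\max,i}$ in the DFD case: it relies on (a) the $\min$ in the definition of $\Delta_{d,i}$ forcing $\Delta_{d,i} \le \Delta_{\max,i}$, (b) the non-negativity assumption on the estimated delays, and (c) reading $\Delta_i$ as a convex combination, which is legitimate precisely because $\rho_i \in [0,1]$. Importantly, none of the concrete delay estimates (the unit wait-action delay, the range-constraint delay $t_{\min,j}+1-t_{e,i}$, the length-constraint delays, and the zero delay for agents violating a length constraint from the ``shorter'' child) enter the argument---they only need to produce a finite, non-negative sum clipped into $[0,\Delta_{\max,i}]$---so the soundness of EECBS is robust to the particular rule-based delay heuristic employed, and future refinements of the estimator require no change to the proof.
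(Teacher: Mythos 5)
Your proposal is correct and follows essentially the same route as the paper's proof: a case split on the sign of $\Delta_{\max,i}$, reduction to GFD in the negative case, and in the non-negative case the bound $\Delta_i \le \Delta_{\max,i}$ obtained from $\rho_i \in [0,1]$ and from reading the DFD formula as a convex combination of $\Delta_{d,i}$ and $\Delta_{\max,i}$, which is then substituted into the inequality chain of Equation~(\ref{eq:proof_CT}) before invoking Lemma~\ref{lem:complete}. The only additions beyond the paper's argument are minor bookkeeping points (well-definedness of $\rho_i$ and the check $\tau_i \ge lb_i(N)$), which do not change the structure of the proof.
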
%
\begin{proof}
If $\Delta_{\max,i} < 0$ holds, we follow the proof in Theorem \ref{thm:GFD} since we switch to GFD for both CFD and DFD. So, here we only focus on the case where $\Delta_{\max,i} \geq 0$ holds. For CFD, it is trivial since the set of conflicts that agent $a_i$ encounters with other agents is a subset of all the conflicts among the paths in the CT node.
Thus, $\rho_i \in [0,1]$ holds, indicating that $\Delta_i \leq \Delta_{\max,i}$.
For DFD, since the value of $\Delta_{d,i}$ is upper-bounded by the maximum allowed flex, the first row of Equation (\ref{eq:DFD}) becomes
\begin{equation}
    \Delta_i = (1 - \rho_i) \cdot \Delta_{d,i} + \rho_i \cdot \Delta_{\max,i} \leq \Delta_{\max,i},
    \label{eq:another_DFD}
\end{equation}%
indicating that the maximum flex that DFD can distribute is $\Delta_{\max,i}$.
Equation (\ref{eq:another_DFD}) can also be interpreted as the sum of the flex $\rho_i \cdot \Delta_{\max,i}$ from CFD and $(1 - \rho_i) \cdot \Delta_{d,i}$ from the delay estimation.
In this case, CFD can be viewed as a special case of DFD where $\Delta_{d,i} = 0$.
Thus, for CFD and DFD, the generated CT node $N$ is still bounded-suboptimal according to Equations (\ref{eq:flex_ub}) and (\ref{eq:proof_CT}), resulting in EECBS being complete and bounded-suboptimal according to Lemma \ref{lem:complete}.
\end{proof}%

\begin{figure}
    \centering
    \includegraphics[width=0.75\linewidth]{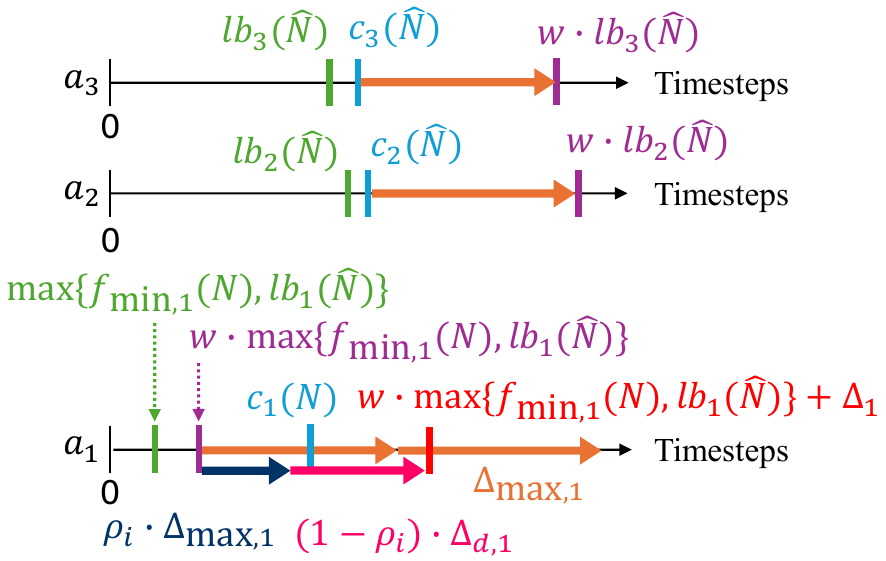}
    \caption{Suppose EECBS-DFD expands CT node $\hat{N}$ and generates one of its child CT nodes $N$ that replans the path of agent $a_1$ with positive maximum allowed flex $\Delta_{\max,1}$ over agents $a_2$ and $a_3$.
    The orange solid arrows are the maximum allowed flex, and the blue and pink ones are the distributed flex from DFD.}
    \label{fig:DFD_example}
\end{figure}%


\section{Mixed-Strategy Flex Distribution (MFD)}

Although CFD and DFD eliminate the amount of distributed flex, there may still exist a situation where some agents utilize too much flex and thus increase the SOC.
Thus, based on CFD and DFD, we also propose a \textit{Mixed-Strategy Flex Distribution} (MFD).
Suppose that EES expands CT node $\hat{N}$ and generates one of its child CT nodes $N$ with constraints $\Psi_i(N)$ on agent $a_i$.
If the maximum allowed flex is negative, then MFD switches to GFD.
Otherwise, to find a path for agent $a_i$ in CT node $N$, MFD first uses DFD to calculate the flex $\Delta_i$ (i.e., Equation (\ref{eq:DFD})).
This provides an under-estimation of its threshold, i.e., $w \cdot lb_i(\hat{N}) + \Delta_i$.
Then, MFD checks whether the sum of the threshold and the SOC from other agents is still globally bounded-suboptimal.
That is, if
\begin{equation}
    w \cdot lb_i(\hat{N}) + \Delta_i + \sum_{j \in [k] \setminus \{i\}} c_j(N) \leq w \cdot LB
    \label{eq:MFD}
\end{equation}%
holds, then the flex $\Delta_i$ is used for finding the path for agent $a_i$.
Otherwise, MFD switches to CFD to calculate the flex via Equation (\ref{eq:CFD}), i.e., set $\Delta_{d,i}$ to zero.
Again, MFD then checks whether Equation (\ref{eq:MFD}) holds. If so, then the flex is used.
Otherwise, MFD tries to calculate the flex via the CT node $N_F$ with the minimum SOLB in LISTs, i.e, the CT node $N_F$ with $LB(N_F) = LB$.
If both equations,
\begin{equation}
    \sum_{j \in [k] \setminus \{i\}} lb_j(N_F) < \sum_{j \in [k] \setminus \{i\}} lb_j(N)
    \label{eq:MFD_cond1}
\end{equation}%
and
\begin{equation}
    \sum_{j \in [k] \setminus \{i\}} c_j(N) < w \cdot \sum_{j \in [k] \setminus \{i\}} lb_j(N_F),
    \label{eq:MFD_cond2}
\end{equation}%
hold, then MFD modifies the maximum allowed flex as
\begin{equation}
    \Delta_{\max,i} = w \cdot \sum_{j \in [k] \setminus \{i\}} lb_j(N_F) - \sum_{j \in [k] \setminus \{i\}} c_j(N).
\end{equation}%
In this case, the condition from Equation (\ref{eq:MFD_cond1}) ensures that the maximum allowed flex computed from CT node $N_F$ is lower than that computed from CT node $N$, and the condition from Equation (\ref{eq:MFD_cond2}) ensures that the maximum allowed flex remains positive.
Also, if $lb_i(N) = lb_i(\hat{N}) = lb_i(N_F)$ holds, then the SOC of CT node $N$ satisfies
\begin{equation}
\begin{split}
    C(N) &\leq \sum_{j \in [k] \setminus \{i\}} c_j(N) + w \cdot lb_i(N) + \Delta_i\\
         &\leq \sum_{j \in [k] \setminus \{i\}} c_j(N) + w \cdot lb_i(N_F) + \Delta_{\max,i}\\
         &\leq \sum_{j \in [k] \setminus \{i\}} c_j(N) + w \cdot lb_i(N_F) +\\
         &\qquad\sum_{j \in [k] \setminus \{i\}} (w \cdot lb_j(N_F) - c_j(N)) = w \cdot LB,
\end{split}
\end{equation}%
indicating the CT node $N$ is globally bounded-suboptimal and thus has a chance to be expanded in future iterations.
Otherwise, as long as one of the equations does not hold, then MFD distributes zero flex.
Since MFD further eliminates the amount of distributed flex if the maximum allowed flex is positive and switches to GFD otherwise, it is still guaranteed that EECBS with MFD is complete and bounded-suboptimal.
Figure~\ref{fig:MFD_flowchart} shows how MFD calculate flex $\Delta_i$ when finding a path for agent $a_i$ satisfying constraints $\Psi_i(N)$ in CT node $N$.

\begin{figure}
    \centering
    \includegraphics[width=0.6\linewidth]{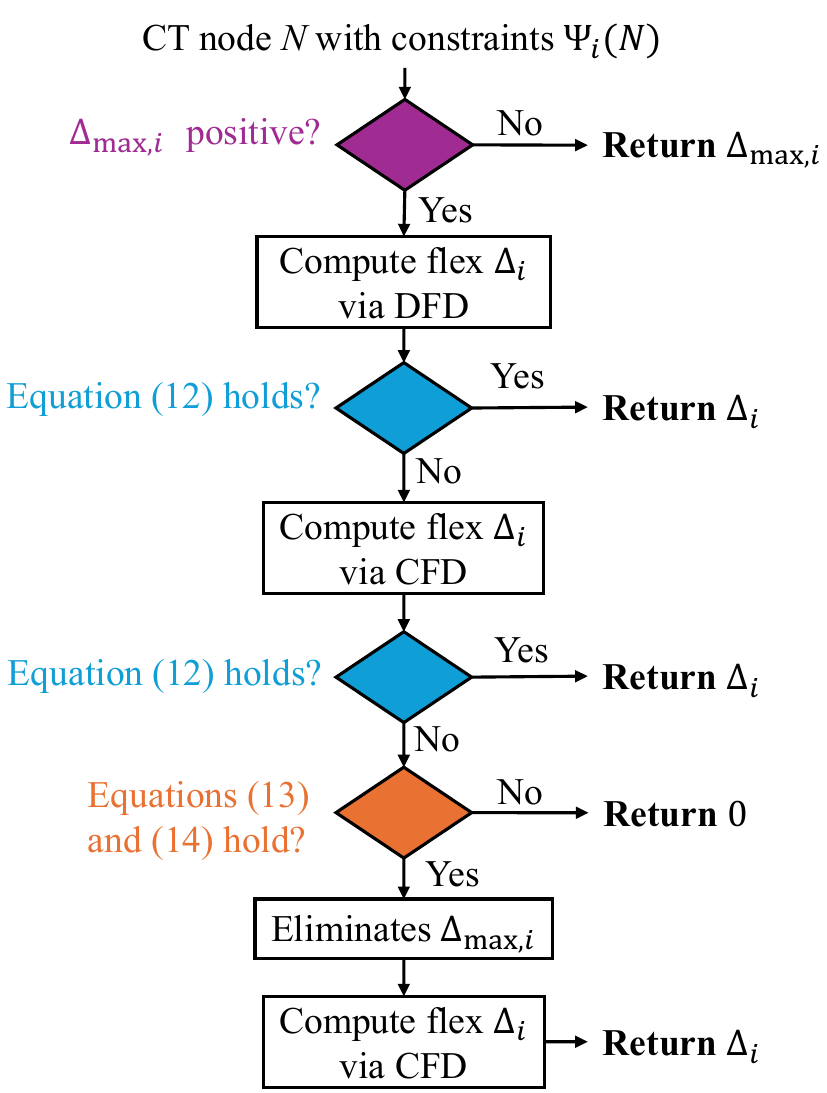}
    \caption{The flow chart of calculating flex via MFD.}
    \label{fig:MFD_flowchart}
\end{figure}%


\section{Re-design Low-level Focal-A* Search (FA*)}
Since GFD suffers from poor lower bound improvements when solving MAPF instances in congested graphs (e.g., tens of agents in a graph of size $32 \times 32$), we adapt \textit{Focal-A*} (FA*) search from~\citet{ChanAAAI2022}, which switches from focal search to A* search on the low level if the number of expansions exceeds a hyperparameter multiplying the number of expansions last time the previous search used to find the path of the same agent along the CT branch.
However, such an approach complicates the search and requires fine-tuning of the hyperparameter.
Thus, we simplify the previous FA* search such that it (\textbf{i}) expands v-t nodes from FOCAL$_L$ until a path is found, and then (\textbf{ii}) expands v-t nodes from OPEN$_L$ until either an optimal path is found or the current v-t node has its $f$-value larger than the cost of the path from step (\textbf{i}). After that, our FA* search returns the path from step (\textbf{i}) and the lower bound from step (\textbf{ii}). In this case, FA* search runs its search on the same search tree but expands v-t nodes with different priorities (between FOCAL$_L$ and OPEN$_L$) and spends more effort improving its lower bound via expanding v-t nodes from OPEN$_L$.
The difference between~\citet{ChanAAAI2022} and ours is that the former returns an optimal path if the number of v-t node expansions exceeds a threshold, while the latter returns the path same as focal search, but with a lower bound being the optimal path cost that satisfies the constraints, which is at least the lower bound from the focal search.


\section{Empirical Evaluation}
From the MAPF benchmark suite~\cite{SternSoCS19MAPFDef}, we evaluate our approach on six large four-connected grid graphs: a \texttt{city} graph (\textit{Boston\_0\_256}) of size $256 \times 256$, two game graphs, which are \texttt{den520d} of size $256 \times 257$ and \texttt{ost003d} of size $194 \times 194$, and a \texttt{warehouse} graph (\textit{warehouse-10-20-10-2-1}) of size $63 \times 161$ with corridor widths of one.
We conduct experiments on the available 25 random scenarios under each number of agents, where the numbers of agents tested are listed on the $x$-axis of Figure~\ref{fig:succ_all}.
For MAPF instances with more than 1000 agents, we generate agents with random start and target vertices in addition to the existing 1000 agents from each random scenario.
We set the runtime limit of 120 seconds unless mentioned otherwise.
We also implement enhancements including Bypassing Conflict~\cite{BoyarskiIJCAI15ICBS}, Prioritize Conflict~\cite{BoyarskiIJCAI15ICBS}, and Symmetric Reasoning~\cite{LiICAPS2020TargetCorridorReasoning}.
For bypassing conflicts with flex distribution, we adopt the mechanism from~\citet{ChanAAAI2022}.
All experiments are run on CentOS Linux, Intel Xeon 2640v4 CPUs, and 64 GB RAM.

\subsection{Performance Comparison}\label{sec:performance}
\begin{figure*}[t]
    \centering
    \includegraphics[width=0.85\linewidth]{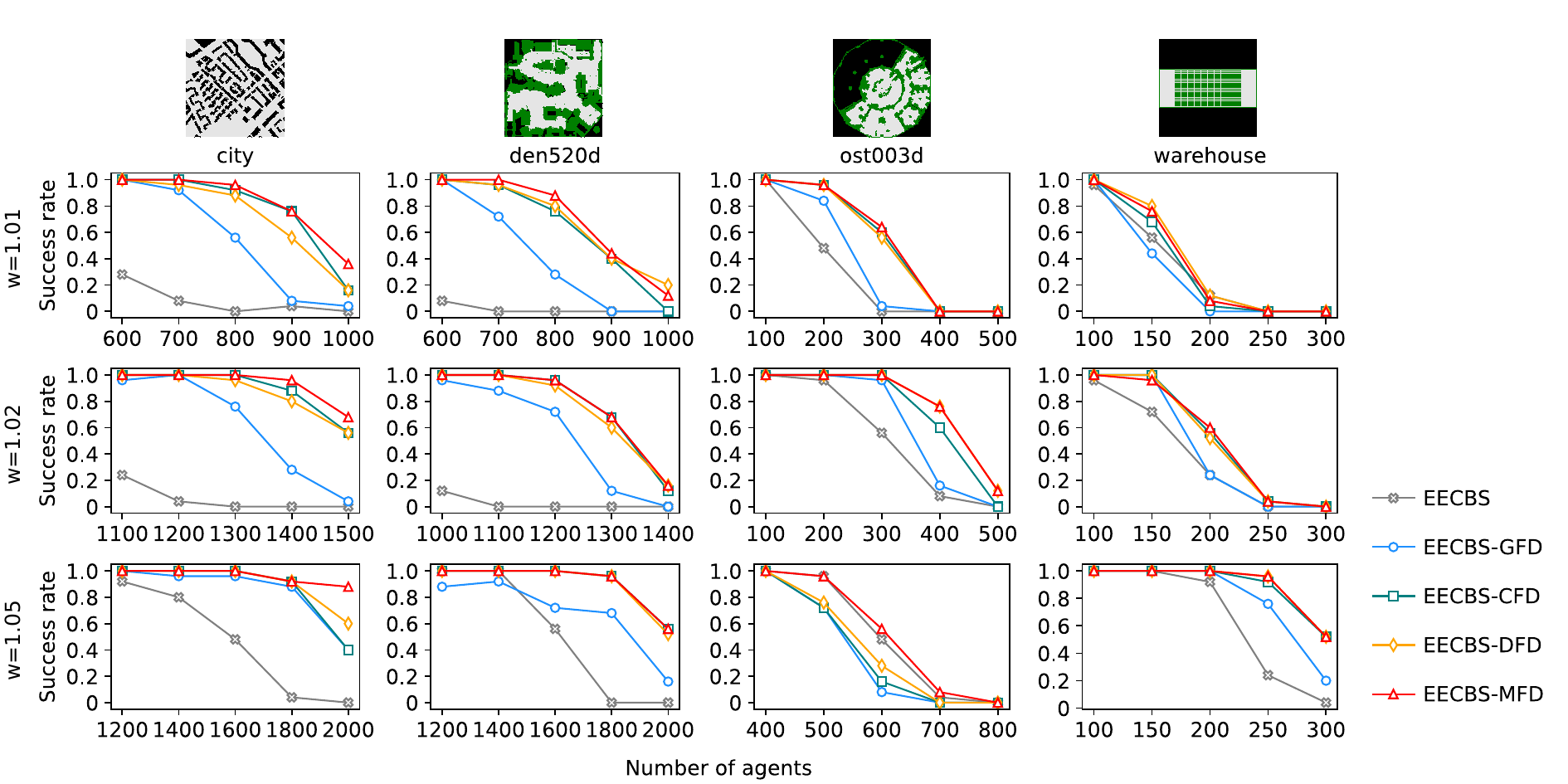}
    \caption{The success rate of EECBS, EECBS-GFD, EECBS-CFD, EECBS-DFD, and EECBS-MFD with suboptimality factor $w=\{1.01, 1.02, 1.05\}$ over MAPF instances with the same number of agents on the same graph.}
    \label{fig:succ_all}
\end{figure*}%

\begin{figure}
    \centering
    \includegraphics[width=0.85\linewidth]{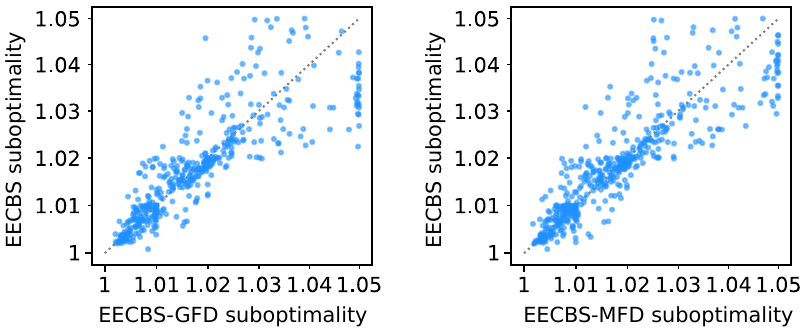}
    \caption{The global suboptimality of EECBS-GFD and EECBS-MFD in comparison to EECBS.}
    \label{fig:subopt-compare}
\end{figure}

We use the success rate to evaluate the efficiency of each EECBS variant and use the global suboptimality to evaluate the solution quality.
As shown in Figure~\ref{fig:succ_all}, EECBS-CFD and EECBS-DFD already outperform the state-of-the-art EECBS and EECBS-GFD, while EECBS-MFD improves the success rates further.
Figure~\ref{fig:subopt-compare} shows the global suboptimality of EECBS-GFD and EECBS-MFD in comparison to that of EECBS.
Each dot in the left (with respect to the right) figure represents a MAPF instance solved by both EECBS and EECBS-GFD (with respect to EECBS-MFD), where its coordinate corresponds to the respective suboptimality.
Among all MAPF instances solved by EECBS and EECBS-GFD, 56\% of them are the ones where EECBS-GFD has a higher global suboptimality than EECBS.
On the other hand, among all MAPF instances solved by EECBS and EECBS-MFD, 53\% of them are the ones where EECBS-MFD has a higher global suboptimality than EECBS, which is lower than that in comparison to EECBS-GFD.


\begin{table}[t]
\centering
\begin{tabular}{|l|rrrrr|}
\hline
          & 100           & 200           & 300           & 400           & 500           \\ \hline
EECBS     & 0.99          & 0.87          & 0.50          & 0.18          & 0.18          \\
EECBS-GFD & \textbf{1.00} & \textbf{1.00} & 0.97          & 0.18          & 0.13          \\
EECBS-MFD & \textbf{1.00} & \textbf{1.00} & \textbf{0.99} & \textbf{0.55} & \textbf{0.22} \\ \hline
\end{tabular}
\caption{The ratio between the depth and the number of expansion in a CT, averaged among all MAPF instances with the same number of agents in \texttt{ost003d} graph and $w=1.02$. Numbers in bold indicate the maximum ratio.}
\label{tab:dp_exp}
\end{table}%

To evaluate the progress that EECBS has made in finding a bounded-suboptimal solution, we define the depth of the CT.
If the EECBS finds a solution within the runtime limit, then the \textit{terminated CT node} is defined as the CT node containing the solution; otherwise, it is defined as the deepest expanded CT node.
Then, the \textit{depth} of a CT is defined as the number of CT nodes along the branch from the root CT node to the terminated CT node.
Table~\ref{tab:dp_exp} shows the ratio between the depth and the number of expansions in a CT among EECBS, EECBS-GFD, and EECBS-MFD.
The higher the ratio, the more \textit{focused} EECBS is, indicating that more CT node expansions are used in resolving conflicts and finding a bounded-suboptimal solution instead of switching between CT nodes on different branches.
When the number of agents is low, EECBS-GFD and EECBS-MFD are more focused than EECBS due to the usage of flex.
However, when the number of agents increases, the ratio of EECBS-GFD decreases due to the greedy mechanism, while EECBS-MFD still maintains the highest ratio.
In addition, Figure~\ref{fig:GFD_limitaion_globRatio} shows that EECBS-MFD has a higher GB ratio than EECBS-GFD.
Thus, MFD shows a better trade-off between increasing the SOC to reduce the number of conflicts and keeping those CT nodes globally bounded-suboptimal for future iterations.

\subsection{Case Study}\label{sec:instance}

\begin{figure}[t]
    \centering
    \includegraphics[width=0.7\linewidth]{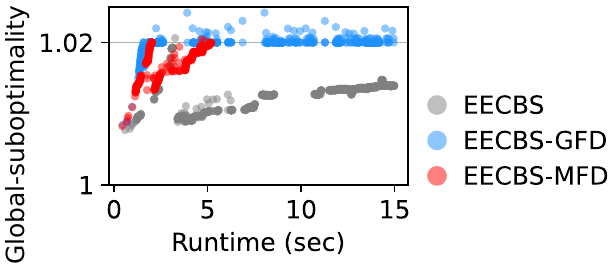}
    \caption{The global suboptimality of CT nodes of EECBS, EECBS-GFD, and EECBS-MFD versus the runtime during the search when solving a MAPF instance with 400 agents on \texttt{ost003d} graph with $w=1.02$. The numbers of generated CT nodes for EECBS, EECBS-GFD, and EECBS-MFD are respectively 1312, 608, and 350.}
    \label{fig:gen_instance}
\end{figure}%

\begin{figure}[t]
    \centering
    \includegraphics[width=0.8\linewidth]{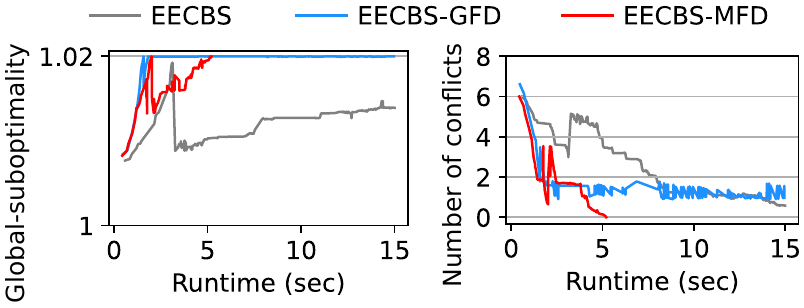}
    \caption{The global suboptimality (left) and the number of conflicts (right) of each expanded CT node during the search of EECBS, EECBS-GFD, and EECBS-MFD when solving the same MAPF instance as Figure~\ref{fig:gen_instance} with $w=1.02$. The numbers of CT node expansions for EECBS, EECBS-GFD, and EECBS-MFD are respectively 843, 395, and 305.}
    \label{fig:exp_instance}
\end{figure}%

\begin{figure}[t]
    \centering
    \includegraphics[width=0.8\linewidth]{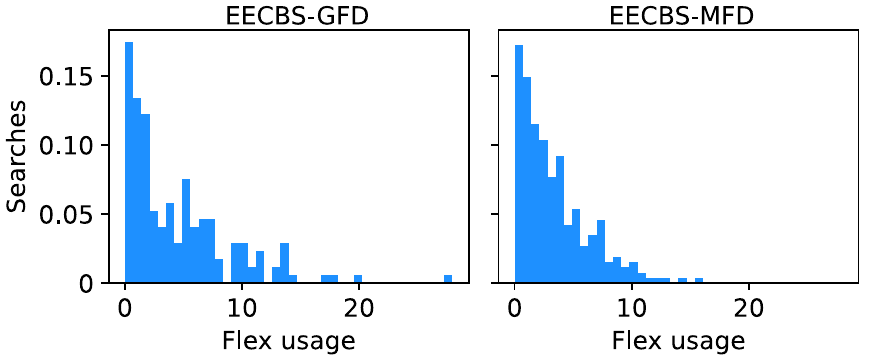}
    \caption{Histogram of flex usage with the non-negative maximum allowed flex for EECBS-GFD and EECBS-MFD, when solving the same MAPF instance as Figure~\ref{fig:gen_instance} with $w=1.02$. The number of searches is normalized, which is the percentage of the focal search processed within the range of flex usage.}
    \label{fig:flex_usg}
\end{figure}%

We run EECBS, EECBS-GFD, and EECBS-MFD, each with the suboptimality factor $w=1.02$, to solve a MAPF instance with 400 agents in \texttt{ost003d} graph under the runtime limit of 15 seconds. While solving this MAPF instance, both EECBS and EECBS-GFD timeout.
Figure~\ref{fig:gen_instance} shows the global suboptimality $C(N) / LB$ of all generated CT nodes $N$ versus runtime during the search, where each $LB$ value is recorded when each CT node is generated.
EECBS, requiring an individually bounded-suboptimal path for each agent, tends to generate CT nodes with low global suboptimality.
EECBS-GFD typically generates CT nodes $N$ with high global suboptimality due to the heavy usage of flex, which results in high SOC $C(N)$, and it may not expand such CT nodes as they may not be globally bounded-suboptimal (i.e., $C(N) / LB \nleq w$). 
On the other hand, MFD typically distributes less flex than GFD when generating a CT node.
Figure~\ref{fig:exp_instance} shows the global-suboptimality and the number of conflicts of each expanded CT node versus the runtime.
Although the number of conflicts in each expanded CT node of EECBS-MFD may be slightly higher than that of EECBS-GFD at the beginning of the search, EECBS-MFD makes a better trade-off than EECBS-GFD between the SOC and the number of conflicts when generating CT nodes and thus finds a bounded-suboptimal solution with zero conflicts under the runtime limit.
%
%
Figure~\ref{fig:flex_usg} shows the histogram of different ranges of flex usage when solving the MAPF instance with EECBS-GFD and EECBS-MFD, respectively. In comparison to EECBS-GFD, EECBS-MFD has a lower percentage of searches that use a large amount of flex (e.g., 0\% of searches with flex usage over 20).

\subsection{Evaluation on Congested MAPF instances}\label{sec:congested}
\begin{figure}[t]
    \centering
    \includegraphics[width=0.85\linewidth]{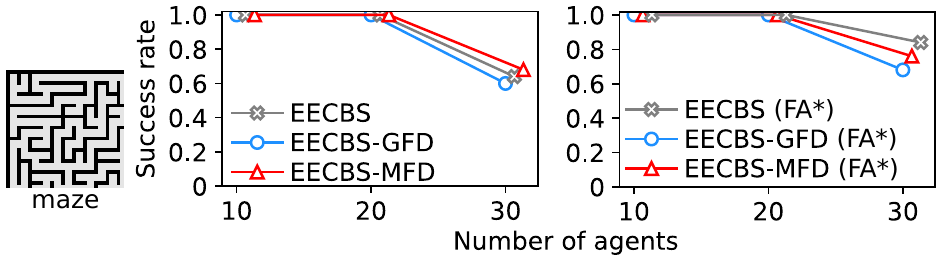}
    \caption{The success rate of EECBS, EECBS-GFD, and EECBS-MFD, with and without FA* search, when solving congested MAPF instances on a \texttt{maze} graph with a 120-second runtime limit and $w=1.05$.}
    \label{fig:congested_succ}
\end{figure}%

\begin{figure}[t]
    \centering
    \includegraphics[width=0.76\linewidth]{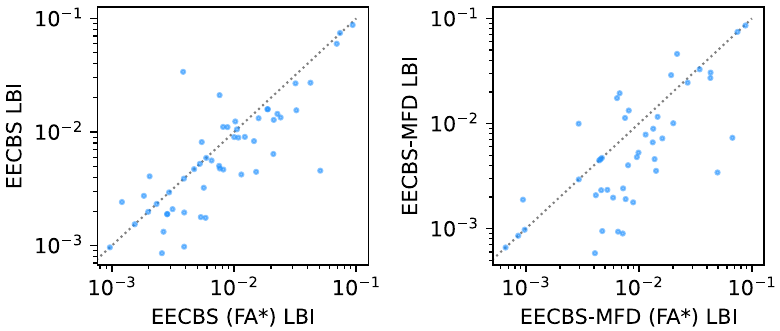}
    \caption{The LBI of EECBS, EECBS-GFD, and EECBS-MFD, with and without FA* search, when solving MAPF instances on \texttt{maze} graph with $w=1.05$.}
    \label{fig:congested_lbi}
\end{figure}%

Although the main goal of this work is to speed up EECBS in solving large-scale MAPF instances, we also test our approaches with $w=1.05$ on congested MAPF instances with 25 available random scenarios on a \texttt{maze} graph (\textit{maze-32-32-2}) of size $32 \times 32$.
Figure~\ref{fig:congested_succ} shows the success rates of different EECBS variants, where EECBS-GFD has the worst performance, and EECBS-MFD slightly outperforms EECBS.
However, with the help of FA* search, all algorithms improve their success rates, and EECBS still reaches the best performance.
Thus, how to distribute flex for congested MAPF instances remains an open question.
Figure~\ref{fig:congested_lbi} shows the lower bound improvement $LBI=(LB-LB_0) / LB_0$ over all MAPF instances used in Figure~\ref{fig:congested_succ}, where $LB$ is the lower bound on the optimal solution when the search terminates and $LB_0$ is the lower bound when the search begins (i.e., SOLB of the root CT node), from which the EECBS and EECBS-MFD have higher LBI with the help of FA* search.

\section{Conclusion}
We address the issue that distributing flex with a greedy mechanism may increase the SOC beyond $w \cdot LB$.
When finding a path for an agent, the objective of focal search is to quickly minimize the number of collisions while satisfying constraints.
Thus, we propose Conflict-Based Flex Distribution, Delay-Based Flex Distribution, and Mix-Strategy Flex Distribution. We also use FA* search to speed up EECBS for the congested MAPF instances. Our experiments show that our approaches are more efficient than the state-of-the-art EECBS.
Future work includes a better estimation of delays for constraints and a better mechanism that particularly targets congested MAPF instances.

\section{Acknowledgments}
The research at the University of California, Irvine and University of Southern California was supported by the National Science Foundation (NSF) under grant numbers 2434916, 2346058, 2321786, 2121028, and 1935712 as well as gifts from Amazon Robotics.

\bibliography{aaai25}

\end{document}